\newtheorem{remark}{\textbf{Remark}}
\newtheorem{proposition}{\textbf{Proposition}}
\begin{document}
\IEEEoverridecommandlockouts

\title{FedNC: A Secure and Efficient Federated Learning Method with Network Coding}

\author{\IEEEauthorblockN{
            Yuchen Shi\IEEEauthorrefmark{2}, Zheqi Zhu\IEEEauthorrefmark{2}, Pingyi Fan\IEEEauthorrefmark{1}\IEEEauthorrefmark{2},~\IEEEmembership{Senior Member,~IEEE,}\\ Khaled B. Letaief\IEEEauthorrefmark{3},~\IEEEmembership{Fellow,~IEEE,} and Chenghui Peng\IEEEauthorrefmark{4}}
        
	    \IEEEauthorblockA{
            \IEEEauthorrefmark{2}Dept. of Electronic Engineering, BNRist, Tsinghua University.\\
		    \IEEEauthorrefmark{3}Dept. of Electronic and Computer Engineering, HKUST.\\
            \IEEEauthorrefmark{4}Wireless Technology Laboratory, Huawei Technologies.\\
            \IEEEauthorrefmark{1}Email: fpy@tsinghua.edu.cn} 
        
        \vspace*{-7mm}

        \thanks{\IEEEauthorrefmark{1}Pingyi Fan is the corresponding author. This work was supported by the National Key Research and Development Program of China (Grant NO.2021YFA1000500(4)).}
        \thanks{\IEEEauthorrefmark{3}K. B. Letaief was supported in part by the Hong Kong Research Grant Council under Grant No. 16208921.}
        \thanks{Accepted by 2024 IEEE Wireless Communications and Networking Conference (WCNC) as a conference paper.}
        }

\maketitle
\IEEEpeerreviewmaketitle

\begin{abstract}
	
Federated Learning (FL) is a promising distributed learning mechanism which still faces two major challenges, namely privacy breaches and system efficiency. In this work, we reconceptualize the FL system from the perspective of network information theory, and formulate an original FL communication framework, FedNC, which is inspired by Network Coding (NC). The main idea of FedNC is mixing the information of the local models by making random linear combinations of the original parameters, before uploading for further aggregation. Due to the benefits of the coding scheme, both theoretical and experimental analysis indicate that FedNC improves the performance of traditional FL in several important ways, including security, efficiency, and robustness. To the best of our knowledge, this is the first framework where NC is introduced in FL. As FL continues to evolve within practical network frameworks, more variants can be further designed based on FedNC.

\end{abstract}

\begin{IEEEkeywords}
federated learning, network coding, data privacy, system efficiency.
\end{IEEEkeywords}

\section{Introduction} \label{sec:1_intro}
\IEEEPARstart{D}{ata} privacy and system efficiency are becoming more and more important in machine learning with the advent of the Big Data era. As a promising distributed learning framework, Federated Learning (FL) is a powerful learning mechanism where local clients collaboratively train a model with the help of a central server while keeping the data decentralized \cite{kairouz2021advances}. Unlike centralized learning that requires all data to be pooled for training, FL only needs the interaction of model parameters (or gradients) while the training data remains localized, which greatly improves the security and availability of the system. 

Since its introduction in 2016 \cite{mcmahan2017communication}, FL has taken on a crucial role in the 6th Generation or 6G system as the combination of distributed Artificial Intelligence (AI) and communication networks has been receiving increasing attention.
Specifically, FL demonstrates its great potential in deep learning, naturally fitting the decentralized structure of multi-user networks \cite{wan2023global} and multi-agent reinforcement learning systems \cite{zhu2021federated, wu2022mobility}. In addition, various frameworks have been derived from the classic FL architecture, such as decentralized FL \cite{lalitha2018fully}, hierarchical FL \cite{liu2022hierarchical}, and layer-wise pruning FL \cite{zhu2023fedlp}. In the future, FL can also be considered in physical layer cases, i.e., multiple-input-multiple-output (MIMO) relay networks \cite{xiong2014optimal} and energy harvesting technology \cite{lu2018global}.

However, despite the fact that FL was originally proposed to protect the data privacy, there is still the possibility of exposing important information during the communication between the local clients and the central server. For model aggregations, clients generally pass the local model parameters to the server over an open channel. If a malicious attack occurs during the transmission, the attacker may be able to steal the model parameters and obtain raw information from the clients, which may include hospital patient data, bank deposit information, company trade secrets, etc. 

The communication bottleneck is also another major challenge in FL systems. The wireless open channel and numerous participating clients, along with the heterogeneity of local data and training devices, amplify communication costs and instability compared to traditional distributed setups with stable wired connections. These factors result in reduced efficiency and robustness. Moreover, a failed model upload from a client forces the FL system to wait for re-upload, wasting time, or to discard parameters, risking errors due to data heterogeneity. 

Given the above, it is clear that privacy breaches and system efficiency are two major challenges that need to be addressed urgently in FL.


\subsection{Related Works and Motivations}
Numerous researches have been investigated to enhance the security of FL systems in the literature. \textit{Differential Privacy} (DP) \cite{abadi2016deep} introduces the controllable noise into raw data, aiming to protect the privacy of individual data. Geyer \textit{et al.} \cite{geyer2017differentially} proposed an algorithm for client sided DP preserving federated optimization via the Gaussian noise mechanism. Agarwal \textit{et al.} \cite{agarwal2018cpsgd} provided convergence guarantees for DP under both Binomial mechanism and Gaussian mechanism. 
However, all these approaches ensure a certain level of security at the expense of accuracy. Moreover, \textit{homomorphic encryption} \cite{acar2018survey} can also protect data privacy in limited scenarios by exchanging encrypted parameters during the learning process, although the extra encryption and approximation will result in trade-offs between accuracy and security. \textit{Secure multiparty computation} \cite{yao1986generate} enables multiple agents to establish a secure function that guarantees no information leakage except its input and output, but due to the excessive communication and computational costs, along with the necessity for carefully designed protocols for different tasks, it is not suitable for large-scale machine learning.

Meanwhile, \textit{coded computation} can also be used to improve the robustness and efficiency of FL. Prakash \textit{et al.} \cite{prakash2020coded} proposed a novel framework, CodedFedL, which adds coding schemes into federated system for mitigating stragglers and speeding up the training procedure. Anand \textit{et al.} \cite{anand2021differentially} improved the data privacy by adding Gaussian noise to the coded data, but they failed to exploit the local computational resources for fast training as the gradient computation on the server is restricted. Sun \textit{et al.} \cite{sun2022stochastic} proposed a stochastic coded FL framework which involved random linear combination of the local data, sharing the similar idea with CodedFedL. However, the problem of coded FL system lies in the necessity to share parity data of local clients to the central server, thereby introducing supplementary communication costs. It also imposes increased demands on the computational resources of the central server, not only requiring it to perform parameter aggregation, but also to compute partial gradient from the composite parity data. 

Inspired by the mechanism of the coded federated systems as well as the similar topology between FL system and network structure, we first combine the FL with \textit{Network Coding} (NC) and propose FedNC, a novel FL framework that enhances both the security and robustness of the federated system without losing accuracy and efficiency.
Ahlswede \textit{et al.} \cite{ahlswede2000network} showed that NC achieves maximal throughput of a multicast network, improving the network's efficiency and scalability, while Li \textit{et al.} \cite{li2003linear} proved that Linear Network Coding (LNC) is sufficient to achieve the max-flow from the source to destinations in multicast problems, so the receivers merely decode linearly independent combinations to obtain the original packets, not requiring persistently stable channels, thereby enhancing the robustness. Furthermore, LNC has the added benefit of security, since the attackers can only decode the packets if they have received a sufficient number of linearly independent encoded packets.
As a result, NC provides advantages aligning well with FL's challenges.

\subsection{Contributions and Paper Organization}

The contributions of this work are summarized as follows:

\begin{itemize}
    \item We propose an original FL system, FedNC, which is inspired by distributed network information theory. FedNC alleviates several difficulties in conventional FL systems, including information security and communication bottlenecks. To the best of our knowledge, this is the first application of NC in the FL scenario.
    \item We design a learning algorithm for FedNC, illustrate the effectiveness of FedNC in preventing privacy breaches, and put forward its potential advantages in network throughput, system robustness and efficiency. Meanwhile, the estimated upper bound for the error probability of the transmission in FedNC is also given.
    \item We conduct numerical experiments to evaluate the performance of FedNC. The outcomes suggest that FedNC significantly increases the robustness and efficiency compared to traditional FL systems, especially in non-iid (independent and identically distributed) cases and large-scale situations.
\end{itemize}

The rest of this paper is organized as follows. Section \ref{sec:2_pre} introduces the basic definitions and underlying concepts of FL and NC. In Section \ref{sec:3_fednc} we illustrate the framework and algorithm of FedNC, explain the benefits that NC brings to FL system in depth. Section \ref{sec:4_numerical} presents the numerical experiment results, as well as related performance evaluations and other discussions. Finally, we conclude this work in Section \ref{sec:5_conclusion}.

\section{Preliminaries} \label{sec:2_pre}
In this section, we introduce key definitions and preliminaries including basic concepts of FL and NC.
\subsection{Federated Learning Problem}
The original FL problem involves learning a global model with numerous decentralized data from different local clients. In particular, suppose there is a typical FL system with $N$ clients, each client $k$ trains a local model $w_k$ with its local dataset $\mathcal{D}_k$, where $k=1,2,\dots,N$, the goal is to minimize the following distributed optimization problem,
\begin{equation}
    \min_w f(w) \quad \text{where} \quad f(w)\triangleq \sum\limits^N_{k=1} p_kf_k(w)
\end{equation}
where $p_k$ is the weight of the $k$-th client such that $p_k \geq 0$ and $\sum_k p_k =1$, $f(w)$ is the global objective function and $f_k(w)$ is the local objective function. In order to aggregate the data, classic FL procedure $\mathtt{FedAvg}$ \cite{mcmahan2017communication} randomly select $K$ clients to form a participator set $\mathcal{P}_t$ on each communication round, and train the local models with their own data. These $K$ clients upload their model parameters to the central server for model aggregation after the local training is finished. Once the global model is updated, the new global model will be downloaded by selected clients in next round for further training. 


\subsection{Random Linear Network Coding}
In LNC, we assume that there are $K$ packets $\{P_1$, $P_2$, $\dots$, $P_K\}$ that are coded together in a single transmission with each packet having the length of $L$ bits. Every consecutive $s$ bits from the start position form a \textit{symbol} (i.e., a symbol is a byte when $s=8$). Thus, each packet consists of $L/s$ symbols \cite{fragouli2006network}. Symbols are the basic unit of linear combinations which are taken over a finite field (Galois field) $\mathbb{F}_{2^s}$ in order to guarantee the length invariance when performing the linear transformation in binary. Each packet $P_k$ over the network is associated with \textit{coding coefficients} $\alpha_{ik}\in\mathbb{F}_{2^s}$, where $i=1,2,\dots, K$, and each \textit{encoded packet} $C_i$ is given by
\begin{equation}\label{eq_LNC}
    C_i=\sum\limits_{k=1}^K\alpha_{ik}P_k
\end{equation}
Since packets are encoded symbol by symbol, the summation occurs for every symbol position. That is,
\begin{equation}
    C_i^m=\sum\limits_{k=1}^K\alpha_{ik}P_k^m
\end{equation}
where $m=1,2,\dots,L/s$ is the symbol index. Without loss of generality, we use Eqn. (\ref{eq_LNC}) in the subsequent analysis for simplicity. 
It is worth mentioning that to address the difficulty of representing any real numbers in finite field, a common approach is to adapt the training process to make the (over)underflows controllable \cite{kairouz2021advances}, such as by operating on quantization \cite{zhu2023towards}. However, this is beyond the scope of this study.

During the transmission, $C_i$ is sent together with the corresponding \textit{encoded vector} $\boldsymbol{a_i}=(\alpha_{i1},\alpha_{i2},\dots,\alpha_{iK})\in\mathbb{F}_{2^s}^K$.
Consider a node that has received $K$ tuples $(\boldsymbol{a_i}, C_i)$ where $i=1,2,\dots,K$, then these encoded vectors will form an \textit{encoded matrix} $A=(\boldsymbol{a_1}, \boldsymbol{a_2}, \dots, \boldsymbol{a_K})^T\in\mathbb{F}_{2^s}^{K\times K}$.
Retrieving original packets is translated into solving the linear system $C=AP$, where $C=(C_1, C_2, \dots, C_K)^T$ and $P=(P_1, P_2, \dots, P_K)^T$. One can only get a well-posed problem if $A$ is an invertible matrix, i.e., $\{\boldsymbol{a_1}, \boldsymbol{a_2}, \dots, \boldsymbol{a_K}\}$ are linearly independent.

The major problem in LNC design is how to select the linear combinations of each node in the network.
An easy approach is to randomly select the coding coefficients over the finite field $\mathbb{F}_{2^s}$. Random Linear Network Coding (RLNC) \cite{ho2003randomized} provides a powerful and useful application for LNC, eventhough there is a certain probability of selecting linearly dependent combinations. It is proved that the probability that all receivers can obtain the original packets increases with the increase of the finite field size $s$. In fact, numerical experiments indicate that the error probability becomes negligible even for a small field size (i.e., $s=8$).

\begin{figure}[htbp]
    \centering
    \includegraphics[width = 0.8\columnwidth]{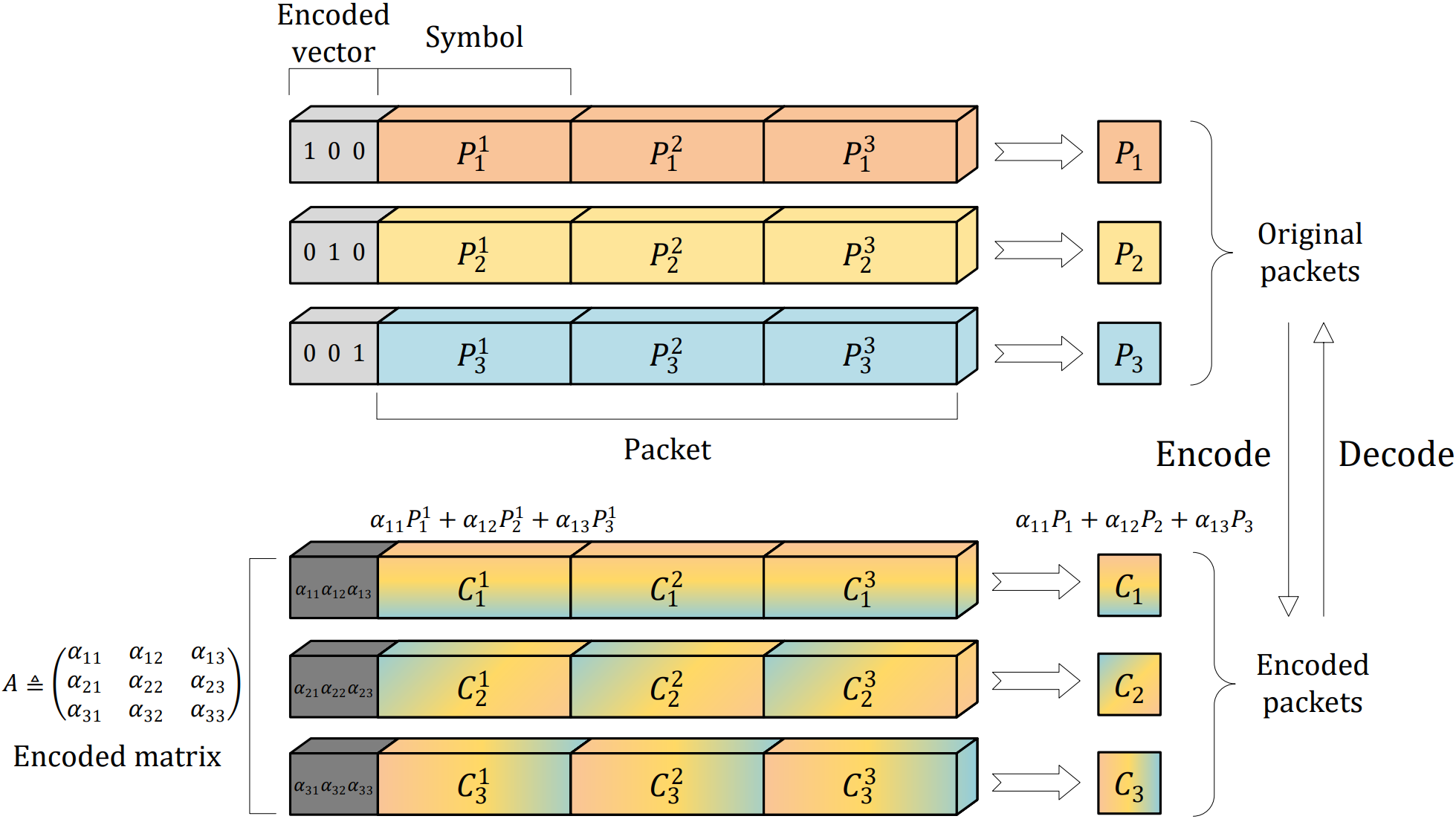}
    \caption{The schematic of the coding process in LNC.}
    \label{fig:nc}
\end{figure}


\section{FedNC: Framework and Algorithm} \label{sec:3_fednc}

In this section, we formally present the framework of FedNC, design the corresponding algorithm implementation, and demonstrate the advantages of FedNC over conventional FL methods.

As mentioned earlier, the main idea of FedNC is mixing the information of the local model parameters before aggregation. There are multiple ways to achieve the mixing. For instance, one can utilize the structure of hierarchical FL where local clients encode their parameters at trusted edge servers before uploading them to the central server. Alternatively, a fully decentralized FL machanism can also be employed, naturally incorporating encoding into the process as parameters are transmitted between users. It's assumed that the mixing procedure typically occurs in nearby cells or closed channels, ensuring that it remains immune to attacks or eavesdropping.


For simplicity, we consider the local parameters uploaded by each client as a packet. When the local training is finished, the participating clients will send their original packets for random linear combinations to get the encoded packets. Afterwards, the encoded packets will be uploaded to the central server, where they are decoded to get the original packets again to perform the subsequent model aggregation. Finally, the global model is updated at the central server and distributed to all local clients.


\begin{figure}[htbp]
    \centering
    \subcaptionbox{Traditional FL system\label{fig:flsystem}}[.48\columnwidth]
        {\includegraphics[width = .45\columnwidth]{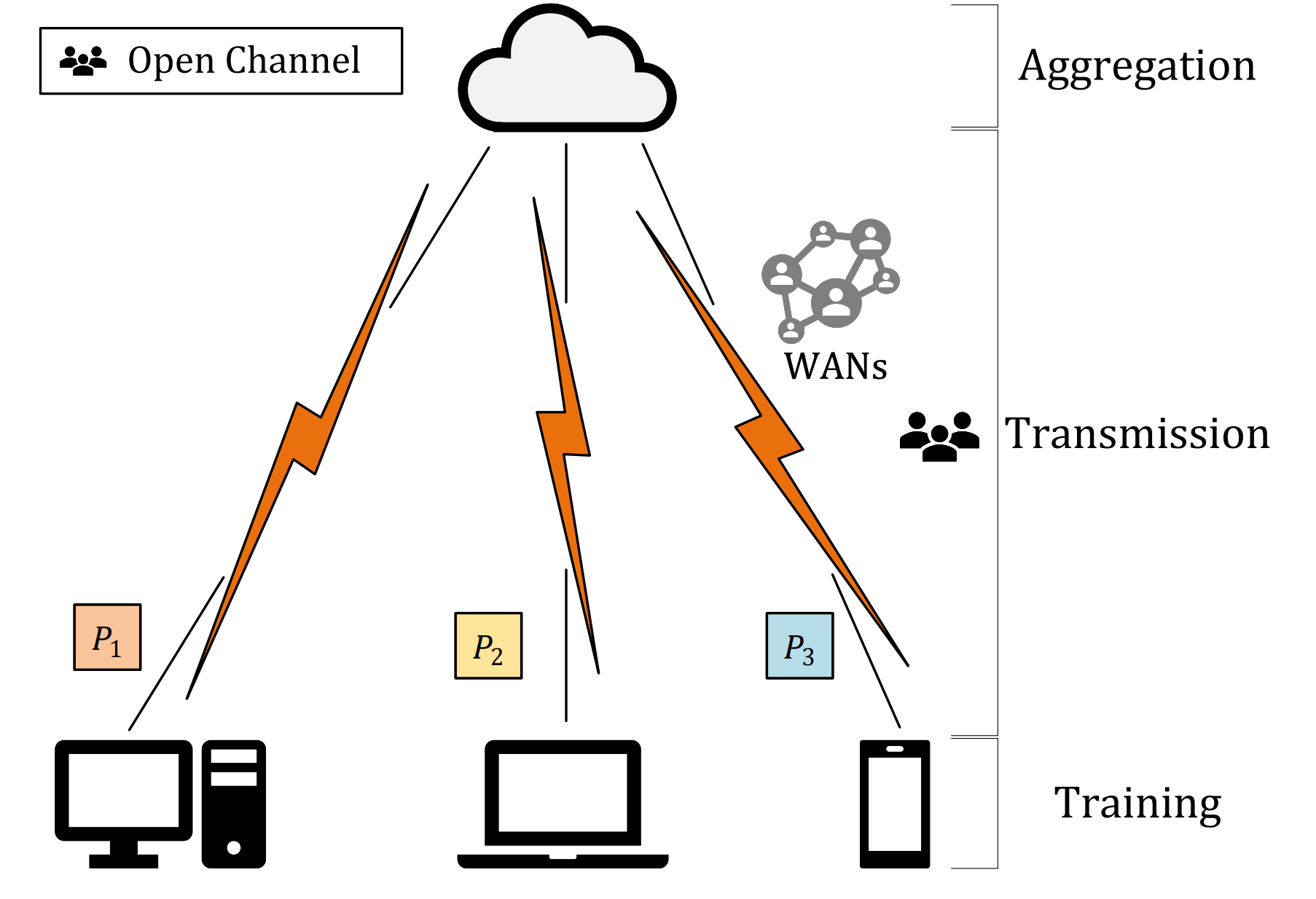}}
    \subcaptionbox{FedNC system\label{fig:fedncsystem}}[.48\columnwidth]
        {\includegraphics[width = .45\columnwidth]{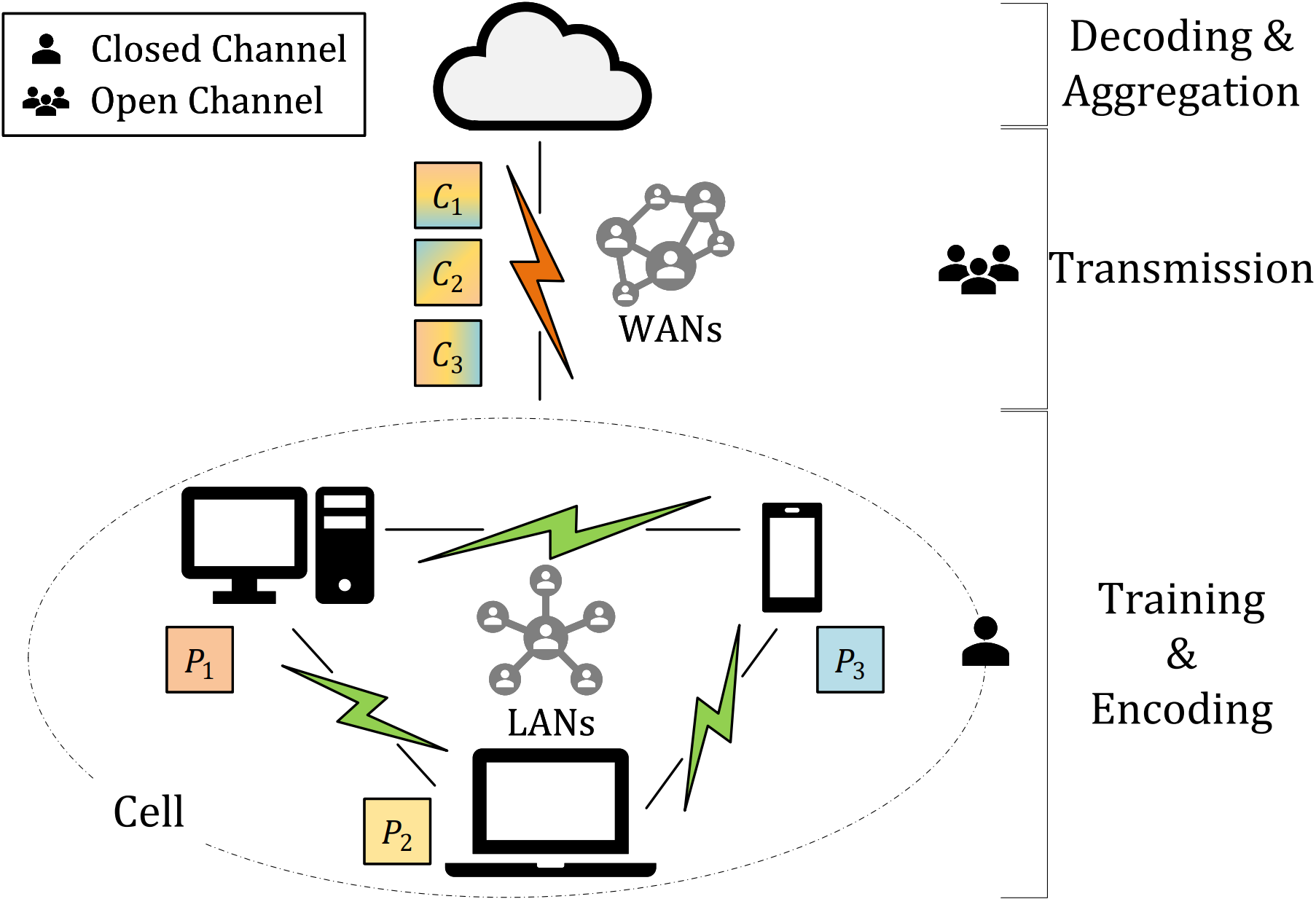}}
    \caption{Traditional FL and FedNC structures. Although both are vulnerable to attack threats during transmission over open channels, FedNC transmits encoded packets while traditional FL transmits original packets.}
\end{figure}

Assume there are $K$ clients involved in the FL communication round $t$, and let $w_k^{(t)}$ denote the local parameters (i.e., original packet) for client $k\in\mathcal{P}_t$, then we have
$$P=(w_1^{(t)}, w_2^{(t)}, \dots, w_K^{(t)})^T$$
According to RLNC, we randomly generate an encoded vector $\boldsymbol{a_i}$, multiplied by $P$ to get the encoded packets, 
\begin{equation}
    C_i=\boldsymbol{a_i}P=\sum\limits_{k=1}^K\alpha_{ik}w_k^{(t)}
\end{equation}
and then transmit them along with the encoded vectors to the central server in the form of encoded tuples $(\boldsymbol{a_i}, C_i)$, where $i=1,2,\dots,K$. Once the server obtains $K$ tuples, an encoded matrix $A$ is formed. If $A$ is invertible, the server can immediately decode them by \textit{Gaussian Elimination} ($\mathtt{GE}$) to get the original packets. However, if $A$ is a singular matrix, it cannot obtain $P$ with $K$ linear dependent equations. In this case, we proceed directly to the next round. All encoding and decoding processes are linearly transformed, so it does not significantly increase the consumption of the computing resources. The pseudo-code of $\mathtt{FedNC}$ is presented in Algorithm \ref{algo_fednc}.
\begin{algorithm}[htbp] 
    \SetAlgoLined
    Initialize the global model $w^{(0)}$\;
    \For{each round $t=1, \dots, T$}{
        $\mathcal{P}_t \gets$ (randomly select $K$ clients)\;
        \For{each client $k\in \mathcal{P}_t$ \textbf{in parallel}}{
        $w_k^{(t)}\gets \mathtt{local\_train}(w^{(t-1)}, \mathcal{D}_k)$\;
        }
        $P\gets [w_1^{(t)}, w_2^{(t)}, \dots, w_K^{(t)}]$\;
        \For{each $i=1, \dots, K$}{
        randomly generated $\boldsymbol{a_i}$\;
        $C_i\gets \boldsymbol{a_i}P$\;
        }
        $C\gets [C_1, C_2, \dots, C_K]$\;
        $A\gets [\boldsymbol{a_1}, \boldsymbol{a_2}, \dots, \boldsymbol{a_K}]$\;
        \eIf{matrix $A$ is invertible}{
        $\hat{P}=[\hat{w}_1^{(t)}, \hat{w}_2^{(t)}, \dots, \hat{w}_K^{(t)}]\gets \mathtt{GE}(A,C)$\;
        $w^{(t)}\gets \sum^K_{k=1} p_k\hat{w}_k^{(t)}$\;
        }{
        $w^{(t)}\gets w^{(t-1)}$\;
        }  
    }
    \caption{$\mathtt{FedNC}$}\label{algo_fednc}
   
\end{algorithm}


\subsection{Benefits of FedNC}
\subsubsection{Security}
FedNC mitigates privacy breaches by employing random linear network coding, which differs from traditional FL methods like $\mathtt{FedAvg}$, where original packets are vulnerable to eavesdropping during transmission. FedNC uses encoded packets for transmission, so the attacker must acquire enough linearly independent encoded packets to access the original data. Additionally, FedNC can be combined with other security methods to further enhance the security.



\subsubsection{Troughput Gain}
Coding technologies are not only used in one-to-one scenes, but also used in more complicated settings, namely multicast, multipath and even mesh systems. Consider a multicast acyclic graph, it has been proven that the maximum capacity can be achieved by performing RLNC over a sufficiently large field \cite{ho2003randomized}. Therefore, FedNC can help better share the available system resources.


\subsubsection{Robustness}
Intuitively, each packet without coding in traditional FL system is equally important. To obtain the parameters of all participators, we have to ensure that all channels are stable at all times. But with FedNC, no packet is irreplaceable. If any $K$ linearly independent combinations are provided, we will be able to recover the original packets. 


\subsubsection{Efficiency}
Unlike homomorphic encryption or multiparty computation that significantly increase communication costs, FedNC only requires the additional transmission of encoded tuples, which is negligible compared to the original parameter size. Additionally, FedNC doesn't share a portion of local data with the central server like the coded FL systems requiring, thus reducing its computational load. Furthermore, if the server cannot guarantee that each received packet comes from a different client, the efficiency of FedNC might even surpass that of $\mathtt{FedAvg}$, as described in Proposition \ref{pro:ccp}.

\begin{proposition} \label{pro:ccp}
Consider an FL system and assume that the central server would like to collect the packets from all clients in the set $\mathcal{P}_t$, where $|\mathcal{P}_t|=K$, and the number of packets for each client is the same. Assume that the server selects packets according to random sampling. Let $G$ be the total number of packets the server needs to obtain so as to collect all $K$ types of packets, then the expectation of $G$ satisfies:
\begin{equation}
    \mathbb{E}(G)=K\ln K+\gamma K+\frac{1}{2}+\mathcal{O}\left(\frac{1}{K}\right)
\end{equation}
where $\gamma\approx 0.577$ is the Euler–Mascheroni constant.
\end{proposition}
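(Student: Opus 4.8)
The statement is the classical \emph{Coupon Collector Problem}, so the plan is to use the standard decomposition of $T$ into a sum of independent geometric waiting times and then invoke the known asymptotics of the harmonic numbers.

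First I would split $T$ by ``stages.'' Say the collection is in stage $i$ once exactly $i-1$ distinct types have been obtained, and let $T_i$ be the number of packets drawn during stage $i$, i.e.\ until the $i$-th new type appears; then $T=\sum_{i=1}^{K}T_i$. Since the server samples uniformly at random among $K$ equally sized types, in stage $i$ each fresh draw yields a not-yet-seen type with probability $p_i=(K-i+1)/K$, independently of the history, so $T_i$ is geometric with parameter $p_i$ and $\mathbb{E}(T_i)=1/p_i=K/(K-i+1)$. By linearity of expectation,
\begin{equation}
\mathbb{E}(T)=\sum_{i=1}^{K}\frac{K}{K-i+1}=K\sum_{j=1}^{K}\frac{1}{j}=K H_K,
\end{equation}
where $H_K$ is the $K$-th harmonic number.

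Finally I would substitute the Euler--Maclaurin expansion
\begin{equation}
H_K=\ln K+\gamma+\frac{1}{2K}+\mathcal{O}\!\left(\frac{1}{K^{2}}\right),
\end{equation}
multiply through by $K$, and collect terms to obtain $\mathbb{E}(T)=K\ln K+\gamma K+\tfrac12+\mathcal{O}(1/K)$, as claimed.

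The only point needing a little care is the independence/geometric structure: once we condition on ``$i-1$ types already seen,'' the particular identity of those $i-1$ types is irrelevant by the symmetry of uniform sampling, so the waiting time for the next new type depends only on the index $i$; this is what makes the $T_i$ independent with the stated parameters. The harmonic-number asymptotics are standard, so I do not expect a genuine obstacle here — the main thing to keep straight is the reindexing $\sum_{i=1}^{K}K/(K-i+1)=K\sum_{j=1}^{K}1/j$.
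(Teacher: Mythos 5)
Your proof is correct and is exactly the standard argument behind the result the paper invokes: the paper itself offers no proof of Proposition~\ref{pro:ccp}, simply remarking that it is the classical Coupon Collector's Problem, and your stage decomposition into geometric waiting times $T_i$ with $\mathbb{E}(T)=KH_K$ followed by the expansion $H_K=\ln K+\gamma+\frac{1}{2K}+\mathcal{O}(K^{-2})$ is precisely the textbook derivation of that fact. The only cosmetic remark is that independence of the $T_i$ (while true) is not needed, since linearity of expectation alone gives $\mathbb{E}(T)=\sum_i \mathbb{E}(T_i)$.
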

\begin{proof}
    Let $G_i$ measures the number of packets that need to be obtained to collect the $i$-th client's packet, we can express $G=\sum_{i=1}^{K}G_i$, $G_i \sim Ge(\frac{K-i+1}{K})$ where $Ge(\cdot)$ stands for geometric distribution. Since the expectation of a geometric random variable follows $\mathbb{E}(G_i)=\frac{K}{K-i+1}$, then we have:
    \begin{align}
        \mathbb{E}(G)&=\mathbb{E}(G_1)+\mathbb{E}(G_2)+\cdots+\mathbb{E}(G_K) \\
        &=\frac{K}{K}+\frac{K}{K-1}+\cdots+\frac{K}{1}=KH(K)
    \end{align}
    $H(K)$ refers to the $K$-th harmonic number:
    \begin{align}
        H(K)&=1+\frac12+\frac13+\cdots+\frac1K \\
        &\approx \ln K+\gamma +\frac{1}{2K}+\mathcal{O}\left(\frac{1}{K^2}\right)
    \end{align}
    where $\gamma\approx 0.577$ is the Euler–Mascheroni constant.
\end{proof}


\vspace{-8pt}
\begin{remark}
    Proposition \ref{pro:ccp} presents an adaptation for the FL scenario of a well-known conclusion in probability theory, known as the 'Coupon Collector's Problem', which describes a '\textit{collecting all coupons and win}' situation. 
    The advantage of FedNC comes into play when we model the channel as real and complex networks instead of point-to-point links. When we obtain packets from real networks such as the Internet, we do not select packets from specific sources, but receive all that we can. If we model the process of acquiring packets as random sampling, a specific packet will become rare for the receiver. In fact, a centralized gossip-based protocol achieves the information sharing in $\mathcal{O}(K)$ times, whereas a traditional FL system needs $\mathcal{O}(K\ln K)$ times, according to Proposition \ref{pro:ccp}. However, due to FedNC's random encoding mechanism, no more specific packets exist, so it is very likely that the original packets will be obtained as long as $K$ encoded packets are arbitrarily received, i.e., in $\mathcal{O}(K)$ times.
\end{remark}



\subsection{Error probability of transmission in FedNC}
Since the encoding coefficients are randomly generated in RLNC, there is a certain probability that the central server will not be able to decode the original packet successfully. For this problem, the authors of \cite{ho2003randomized} theoretically considered a feasible multicast system on a (possibly cyclic) network, and gave a lower bound on the probablity of successful recovery. We manage to generalize the conclusion to the FL scenario, and obtain a bound of the error probability during packets' decoding in FedFC as shown in the following proposition.

\begin{proposition} \label{pro:pe}
    For a FedNC system described in Algorithm \ref{algo_fednc}, assume that all the coding coefficients $\alpha_{ik}\in\mathbb{F}_{2^s}$. Suppose that the error probability for one communication round is denoted by $p_e$, then we have
    \begin{equation}
        p_e\leq 1-\left(1-\frac{1}{2^s}\right)^\eta
    \end{equation}
    where $\eta$ is the maximum number of links receiving signals with independent random coding coefficients in any set of links constituting a flow solution from the sources to the receiver.
\end{proposition}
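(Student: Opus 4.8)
The plan is to reduce the decoding event in FedNC to the classical non-vanishing criterion for random linear network codes and then invoke the probabilistic estimate of \cite{ho2003randomized}. First I would model the edge-to-central transmission as a linear network coding instance over $\mathbb{F}_{2^s}$: the $K$ original packets $w_1^{(t)},\dots,w_K^{(t)}$ play the role of the source processes injected at the edge server(s), the central server is the single sink, and every link (including the $K$ links that emit the encoded packets $C_i$, together with any relaying links if the edge--central portion is a multihop/mesh network) carries a linear combination of its inbound symbols with coefficients drawn independently and uniformly from $\mathbb{F}_{2^s}$. In this model the central server recovers $P$ exactly if and only if the transfer matrix $M$ mapping the $K$ source symbols to the $K$ symbols it collects is invertible over $\mathbb{F}_{2^s}$ (in the single-hop case this is precisely the condition that $A=(\boldsymbol{a_1},\dots,\boldsymbol{a_K})^T$ be nonsingular). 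Hence $p_e=\Pr[\det M=0]$, and it suffices to upper-bound this probability.

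Next I would write $\det M$ as a polynomial $f$ in the vector of random coding coefficients $\{\alpha_{i,k}\}$. Using the algebraic framework underlying linear network coding, the entries of $M$ are polynomial (in the acyclic case) or rational (in the possibly cyclic WAN case, after clearing a common denominator that itself does not vanish on any feasible assignment) functions of these coefficients, so $f$ is a genuine polynomial. Since FedNC is designed so that a valid code exists — e.g. the edge server can always choose $\boldsymbol{a_1},\dots,\boldsymbol{a_K}$ to form a Vandermonde or identity matrix, and more generally the min-cut from the sources to the central server along the flow solution is at least $K$ — the polynomial $f$ is not identically zero. The crucial structural observation, inherited from \cite{ho2003randomized}, is that each random coefficient is attached to exactly one link and enters the transfer function linearly, so $f$ has degree at most $1$ in each of its variables, and the number of variables appearing along a flow solution from the sources to the receiver is $\eta$.

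Finally I would apply the Schwartz--Zippel-type estimate of \cite{ho2003randomized}: for a nonzero polynomial over $\mathbb{F}_q$ whose largest exponent in each of $\eta$ independently and uniformly chosen variables is at most $d$, the probability that it evaluates to $0$ is at most $1-(1-d/q)^{\eta}$. Specializing to a single receiver ($d=1$, since FedNC has one central server) and $q=2^s$ gives $\Pr[f=0]\le 1-(1-2^{-s})^{\eta}$, i.e. $p_e\le 1-\left(1-\frac{1}{2^s}\right)^{\eta}$, which is the claim.

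The main obstacle I anticipate is the bookkeeping in the middle step: carefully justifying that the $K\times K$ transfer determinant for an entire round still has degree at most $1$ in every random coefficient, even though $K$ distinct encoded packets — hence $K$ coefficient vectors — are generated at once, and making the notions of "flow solution" and its link count $\eta$ well-defined over a possibly cyclic edge--central topology. Once that structure is in place, the result is a direct specialization of the $d$-receiver bound of \cite{ho2003randomized} to the $d=1$ case relevant to FedNC.
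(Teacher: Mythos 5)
Your proposal is correct and follows essentially the same route as the paper: the paper simply invokes the success-probability bound $p\geq\left(1-\frac{d}{2^s}\right)^{\eta}$ of \cite{ho2003randomized} for a multicast system and specializes to a single receiver $d=1$, exactly as you do in your final step. The only difference is that you additionally sketch the internal argument behind that cited bound (transfer-matrix invertibility and the Schwartz--Zippel-type estimate), which the paper treats as a black box.
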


\begin{proof}
    Consider a multicast connection system with unit delay links and independent or linearly correlated sources. Also suppose the probability that all the receivers can decode the source processes is $p$, then according to \cite{ho2003randomized}, we have 
    \begin{equation}
        p\geq\left(1-\frac{d}{2^s}\right)^\eta \quad \text{for} \quad s>\log_2d
    \end{equation}
    where $d$ is the number of receivers and $\eta$ is the number of links carrying random combinations of source processes and/or incoming signals. For a general FL system, we usually have one central server, i.e., $d=1$. Hence, 
    \begin{equation}
        p_e=1-\left.p\right|_{d=1}\leq 1-\left(1-\frac{1}{2^s}\right)^\eta
    \end{equation}

    \vspace{-2.5mm}
\end{proof}

\section{Numerical Results} \label{sec:4_numerical}
In this section, we present numerical experiments to evaluate the performance of FedNC in different circumstances.

\subsection{Experiment Setting}
We design the experiment as an image classification task under CIFAR-10 dataset. The total number of clients $N$ is set as \{100, 200\}, and the participation rate is set as \{0.1, 0.05\}, i.e., there are always 10 clients selected for model aggregation in each round. As for the process of obtaining packets by the server, we adopt the assumptions in Proposition \ref{pro:ccp}, i.e., the server does not know where the packet comes from, which is referred to here as '\textit{blind box effect}'. We consider that this is more consistent with the practical network setup, as well as the FedNC application scenario.

\subsubsection{Local Training}
We adopt the training model as a CNN based 6-Conv. layers neural network with batch normalization and max pooling operations. Adam algorithm is set as the stochastic gradient descent optimizer. For each participator, 5 epochs of local training are operated before sending the local parameters for subsequent encoding or aggregation.

\subsubsection{Data Splitting}
We execute the experiments under two data distribution settings, namely iid and mixed non-iid. For the iid setting, the complete training dataset is randomly assigned to all clients with each client holding data of uniform categories. For the mixed non-iid setting, the dataset is divided into shards with only one category, and each client possesses 2 shards, namely 2 categories, except for the 5\% iid parts.

\subsubsection{Encoding Setting}
Proposition \ref{pro:pe} illustrates that the finite field size $s$ and the number of links carrying random combinations $\eta$ will influence the probability of error, as well as the accuracy and convergence rate. Thus, we evaluate the performance of FedNC under different values of $s$ and $\eta$, and compare the gap between FedNC and classical FL ($\mathtt{FedAvg}$) under different scales of FL.

\subsection{Performance Evaluation}
Firstly, we focus on the test performance of FedNC with different values of $s$ and $\eta$. As presented in Fig. \ref{fig:K[100]} and Table \ref{tab:my_label}, the lower the error probability, the higher the test accuracy and also the faster the convergence rate, which is in line with our analysis. For the iid case, the blind box effect is not obvious due to the fact that each client possesses uniform data. Despite the fact that the convergence of FedNC is a bit slower due to decoding failures, one can observe that FedNC converges to the same optimum with $\mathtt{FedAvg}$ since the final test results are very close. Nevertheless, it is worth noting that for the mixed non-iid case, FedNC has very obvious advantages over classical FL, e.g., it converges faster and reaches a better accuracy. On the one hand, the non-iid property exacerbates the blind box effect of $\mathtt{FedAvg}$, making the outcome less accurate. On the other hand, thanks to the random coding characteristics of FedNC, the server can use packets from different clients for every successful aggregation. As a result, the coding mechanism significantly alleviates the drawbacks caused by the blind box effect while enhancing both the robustness and the efficiency.
\begin{figure}[htbp]
    \centering
    \subcaptionbox{iid splitting\label{fig:iid_K[100]}}[.49\columnwidth]
        {\includegraphics[width = .45\columnwidth]{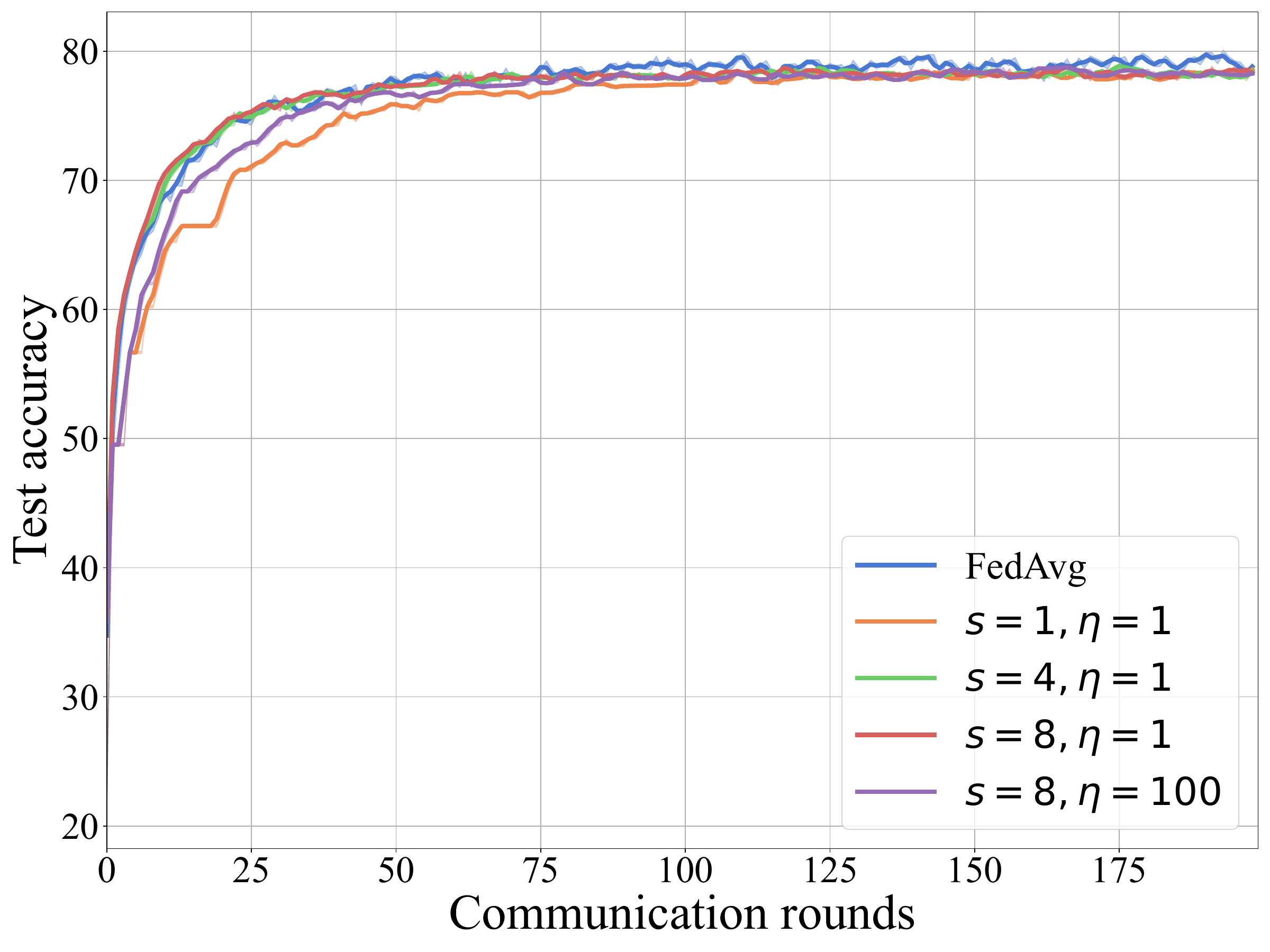}}
    \subcaptionbox{Mixed non-iid splitting\label{fig:miid_K[100]}}[.49\columnwidth]
        {\includegraphics[width = .45\columnwidth]{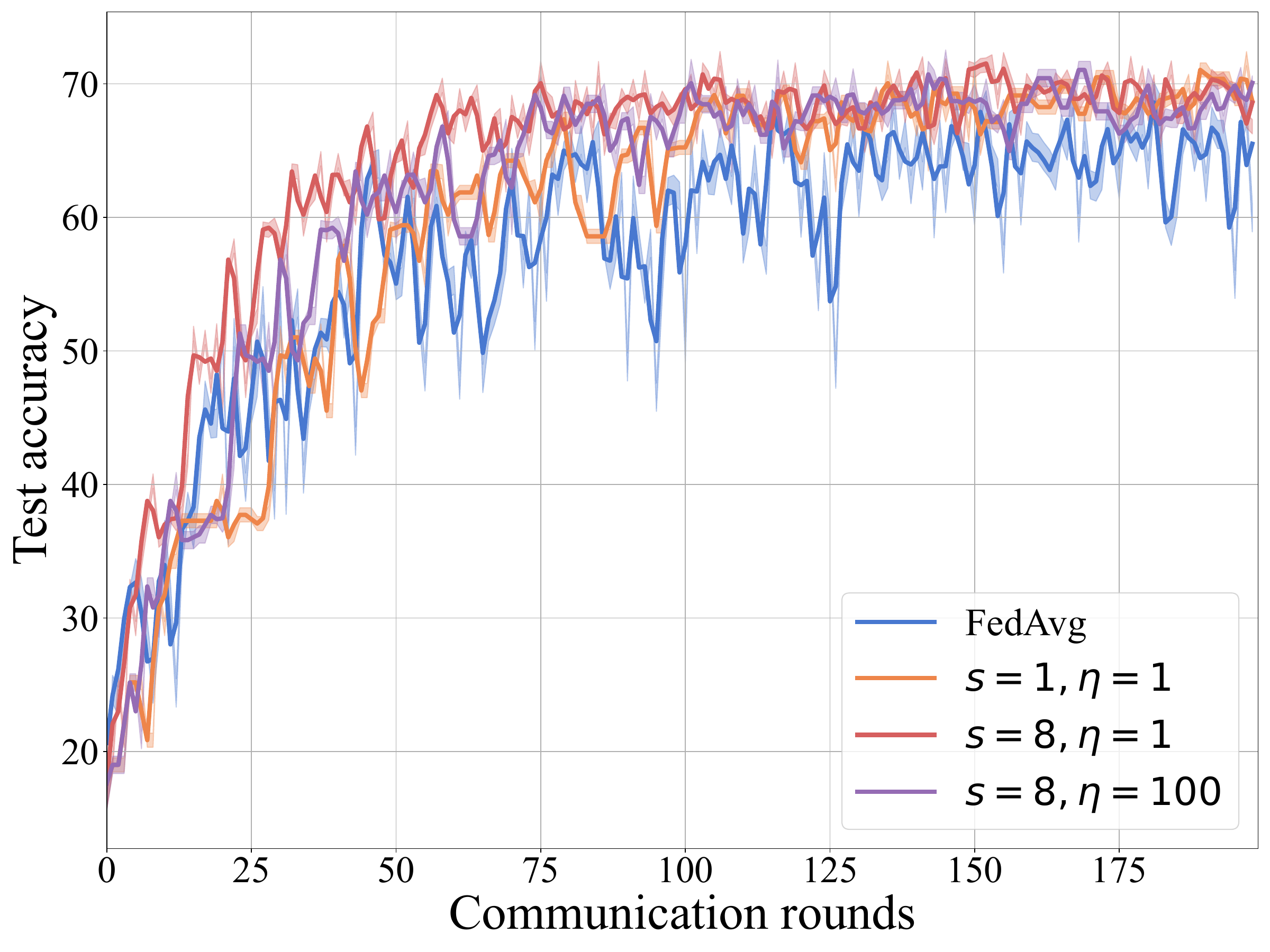}}
    \caption{Comparisons between classical FL ($\mathtt{FedAvg}$) and FedNC with numerous settings of field size $s$ and number of links $\eta$ under two different data splitting, where $N=100$.\label{fig:K[100]}}
\end{figure}


\begin{table}[htbp]
    \centering
    \begin{tabular}{cccc}
        \toprule
        \multirow{2}*{Schemes}  &  \multirow{2}*{Error Probability}  &  \multirowcell{2}{Accuracy (\%) \\ iid / mixed non-iid}  \\
        \\
        \midrule
        FedAvg                  &             -                     & \textbf{79.01} / 64.61  \\
        \midrule
        $s=1, \eta=1$           &            0.5                    & 78.12 / 68.93   \\
        $s=4, \eta=1$           &          0.0625                   & 78.25 / 69.29   \\
        $s=8, \eta=1$           &          0.0039                   & 78.26 / \textbf{69.42}   \\
        $s=8, \eta=100$         &          0.3239                   & 78.32 / 68.39   \\
        \bottomrule
    \end{tabular}
    \caption{Numerical results of the error probability and test accuracy under different settings.}
    \label{tab:my_label}
\end{table}

We also explore the results under different settings of FL system scale. It is necessary that we change the total number of clients and the participation rate together so that the size of the participator set $K$ remains the same. Fig. \ref{fig:K[100,200]} shows an obvious phenomenon that with the scale increasing (from $N=100$ to $200$), the blind box effect becomes more dominant for both data splitting, as the test accuracy declines for $\mathtt{FedAvg}$. However, it is notable that with lower participation rates, the advantage of FedNC becomes more evident, as the test accuracy of FedNC decreases less than classical FL under both data distributions, while converging even faster. This improvement results from the fact that the central server receives linear combinations in FedNC. Thus, on average there are more clients participating in the aggregation per round, and this superiority becomes more apparent when the participation rate is low. The experimental results are instructive because they highlight the advantages of FedNC in large-scale FL training, which is more indicative of practical systems.

\begin{figure}[htbp]
    \centering
    \subcaptionbox{iid splitting\label{fig:iid_K[100,200]}}[.49\columnwidth]
        {\includegraphics[width = .45\columnwidth]{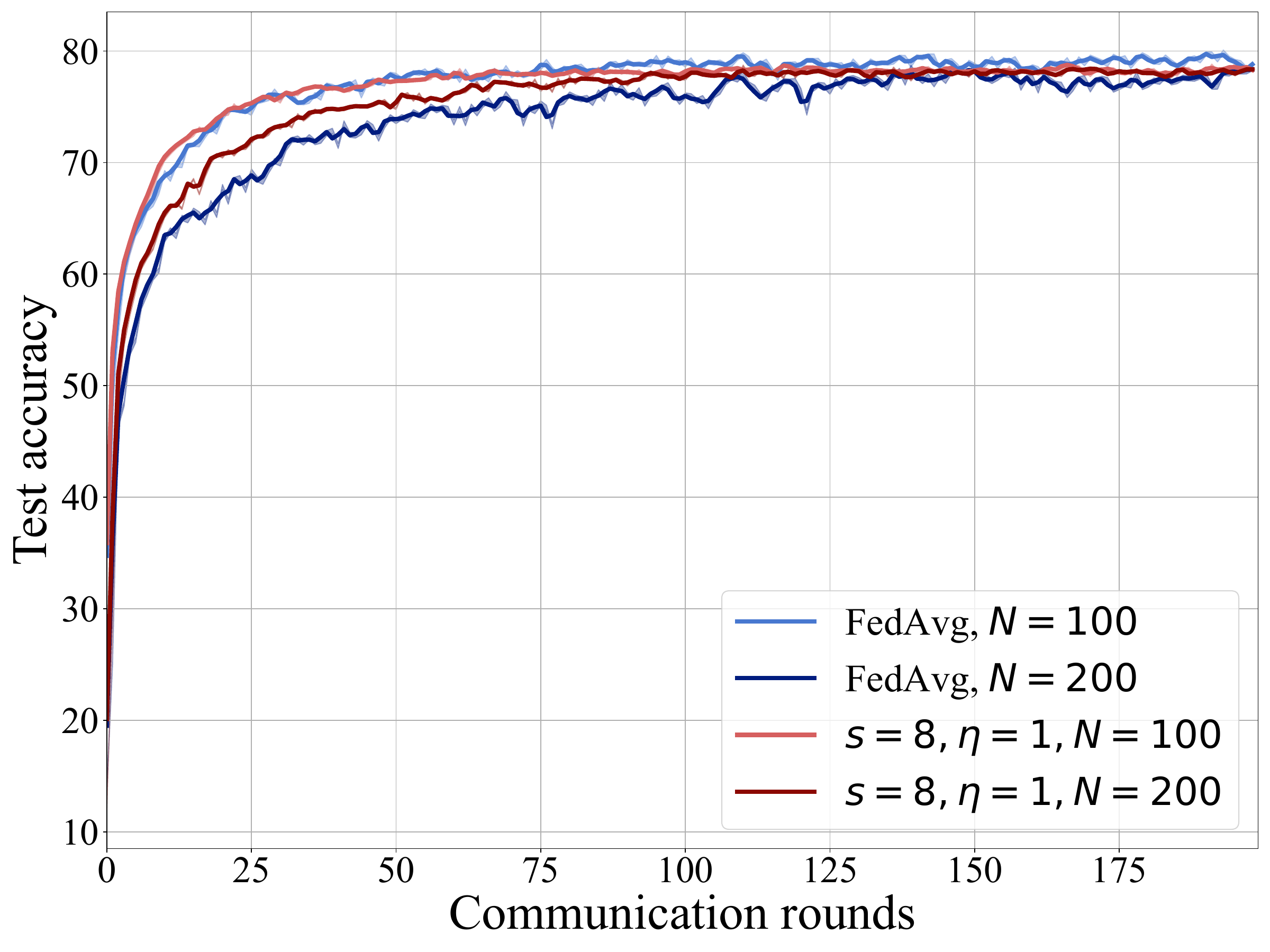}}
    \subcaptionbox{Mixed non-iid splitting\label{fig:miid_K[100,200]}}[.49\columnwidth]
        {\includegraphics[width = .45\columnwidth]{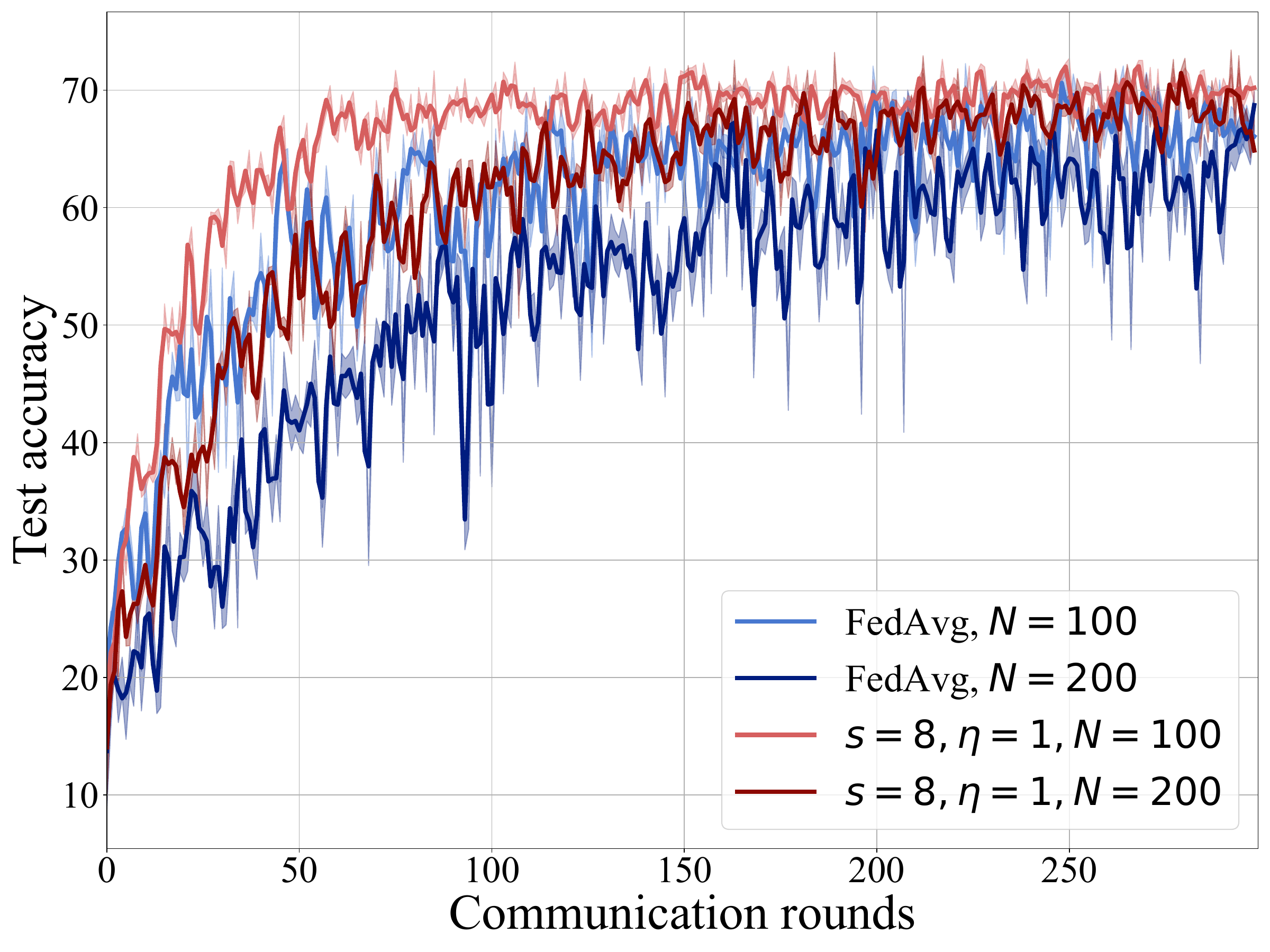}}
    \vspace{-1mm}
    \caption{Comparisons between classical FL ($\mathtt{FedAvg}$) and FedNC ($s=1, \eta=8$) under different system scales.\label{fig:K[100,200]}}
\end{figure}

\section{Conclusion} \label{sec:5_conclusion}
In this work, we considered NC in the context of FL, and proposed an original FL framework, FedNC, which encodes the packets before transmission to optimize security and efficiency. We designed the basic structure and algorithm for FedNC, illustrated several advantages of FedNC over the conventional FL, and theoretically portrayed the blind box effect that conventional FL suffers. Estimation for the error probability's upper bound of FedNC was also derived. The experimental results demonstrated that FedNC can significantly improve the robustness and efficiency compared to traditional FL systems, especially in non-iid cases and large-scale situations.

In the future, FedNC based works can be developed in several aspects. Firstly, the analysis of the convergence rate will be an important extension. Secondly, the theoretical description of FedNC's security guarantees is worth studying. Finally, richer experiments are needed to further explore FedNC's potentials for system efficiency and robustness.

\bibliographystyle{IEEEtran}
\bibliography{IEEEabrv, ref}

\begin{thebibliography}{10}
\providecommand{\url}[1]{#1}
\csname url@samestyle\endcsname
\providecommand{\newblock}{\relax}
\providecommand{\bibinfo}[2]{#2}
\providecommand{\BIBentrySTDinterwordspacing}{\spaceskip=0pt\relax}
\providecommand{\BIBentryALTinterwordstretchfactor}{4}
\providecommand{\BIBentryALTinterwordspacing}{\spaceskip=\fontdimen2\font plus
\BIBentryALTinterwordstretchfactor\fontdimen3\font minus \fontdimen4\font\relax}
\providecommand{\BIBforeignlanguage}[2]{{%
\expandafter\ifx\csname l@#1\endcsname\relax
\typeout{** WARNING: IEEEtran.bst: No hyphenation pattern has been}%
\typeout{** loaded for the language `#1'. Using the pattern for}%
\typeout{** the default language instead.}%
\else
\language=\csname l@#1\endcsname
\fi
#2}}
\providecommand{\BIBdecl}{\relax}
\BIBdecl

\bibitem{kairouz2021advances}
P.~Kairouz, H.~B. McMahan, B.~Avent, A.~Bellet, M.~Bennis, A.~N. Bhagoji, K.~Bonawitz, Z.~Charles, G.~Cormode, R.~Cummings \emph{et~al.}, ``Advances and open problems in federated learning,'' \emph{Foundations and Trends{\textregistered} in Machine Learning}, vol.~14, no. 1--2, pp. 1--210, 2021.

\bibitem{mcmahan2017communication}
B.~McMahan, E.~Moore, D.~Ramage, S.~Hampson, and B.~A. y~Arcas, ``Communication-efficient learning of deep networks from decentralized data,'' in \emph{Artificial intelligence and statistics}.\hskip 1em plus 0.5em minus 0.4em\relax PMLR, 2017, pp. 1273--1282.

\bibitem{wan2023global}
S.~Wan, J.~Lu, P.~Fan, Y.~Shao, C.~Peng, J.~Chuai \emph{et~al.}, ``How global observation works in federated learning: Integrating vertical training into horizontal federated learning,'' \emph{IEEE Internet of Things Journal}, 2023.

\bibitem{zhu2021federated}
Z.~Zhu, S.~Wan, P.~Fan, and K.~B. Letaief, ``Federated multiagent actor--critic learning for age sensitive mobile-edge computing,'' \emph{IEEE Internet of Things Journal}, vol.~9, no.~2, pp. 1053--1067, 2021.

\bibitem{wu2022mobility}
Q.~Wu, Y.~Zhao, Q.~Fan, P.~Fan, J.~Wang, and C.~Zhang, ``Mobility-aware cooperative caching in vehicular edge computing based on asynchronous federated and deep reinforcement learning,'' \emph{IEEE Journal of Selected Topics in Signal Processing}, vol.~17, no.~1, pp. 66--81, 2022.

\bibitem{lalitha2018fully}
A.~Lalitha, S.~Shekhar, T.~Javidi, and F.~Koushanfar, ``Fully decentralized federated learning,'' in \emph{Third workshop on bayesian deep learning (NeurIPS)}, vol.~2, 2018.

\bibitem{liu2022hierarchical}
L.~Liu, J.~Zhang, S.~Song, and K.~B. Letaief, ``Hierarchical federated learning with quantization: Convergence analysis and system design,'' \emph{IEEE Transactions on Wireless Communications}, 2022.

\bibitem{zhu2023fedlp}
Z.~Zhu, Y.~Shi, J.~Luo, F.~Wang, C.~Peng, P.~Fan, and K.~B. Letaief, ``Fedlp: Layer-wise pruning mechanism for communication-computation efficient federated learning,'' in \emph{ICC 2023 - IEEE International Conference on Communications}, 2023, pp. 1250--1255.

\bibitem{xiong2014optimal}
K.~Xiong, P.~Fan, Z.~Xu, H.-C. Yang, and K.~B. Letaief, ``Optimal cooperative beamforming design for mimo decode-and-forward relay channels,'' \emph{IEEE Transactions on Signal Processing}, vol.~62, no.~6, pp. 1476--1489, 2014.

\bibitem{lu2018global}
Y.~Lu, K.~Xiong, P.~Fan, Z.~Ding, Z.~Zhong, and K.~B. Letaief, ``Global energy efficiency in secure miso swipt systems with non-linear power-splitting eh model,'' \emph{IEEE Journal on Selected Areas in Communications}, vol.~37, no.~1, pp. 216--232, 2018.

\bibitem{abadi2016deep}
M.~Abadi, A.~Chu, I.~Goodfellow, H.~B. McMahan, I.~Mironov, K.~Talwar, and L.~Zhang, ``Deep learning with differential privacy,'' in \emph{Proceedings of the 2016 ACM SIGSAC conference on computer and communications security}, 2016, pp. 308--318.

\bibitem{geyer2017differentially}
R.~C. Geyer, T.~Klein, and M.~Nabi, ``Differentially private federated learning: A client level perspective,'' \emph{arXiv preprint arXiv:1712.07557}, 2017.

\bibitem{agarwal2018cpsgd}
N.~Agarwal, A.~T. Suresh, F.~X.~X. Yu, S.~Kumar, and B.~McMahan, ``cpsgd: Communication-efficient and differentially-private distributed sgd,'' \emph{Advances in Neural Information Processing Systems}, vol.~31, 2018.

\bibitem{acar2018survey}
A.~Acar, H.~Aksu, A.~S. Uluagac, and M.~Conti, ``A survey on homomorphic encryption schemes: Theory and implementation,'' \emph{ACM Computing Surveys (Csur)}, vol.~51, no.~4, pp. 1--35, 2018.

\bibitem{yao1986generate}
A.~C.-C. Yao, ``How to generate and exchange secrets,'' in \emph{27th annual symposium on foundations of computer science (Sfcs 1986)}.\hskip 1em plus 0.5em minus 0.4em\relax IEEE, 1986, pp. 162--167.

\bibitem{prakash2020coded}
S.~Prakash, S.~Dhakal, M.~R. Akdeniz, Y.~Yona, S.~Talwar, S.~Avestimehr, and N.~Himayat, ``Coded computing for low-latency federated learning over wireless edge networks,'' \emph{IEEE Journal on Selected Areas in Communications}, vol.~39, no.~1, pp. 233--250, 2020.

\bibitem{anand2021differentially}
A.~Anand, S.~Dhakal, M.~Akdeniz, B.~Edwards, and N.~Himayat, ``Differentially private coded federated linear regression,'' in \emph{2021 IEEE Data Science and Learning Workshop (DSLW)}.\hskip 1em plus 0.5em minus 0.4em\relax IEEE, 2021, pp. 1--6.

\bibitem{sun2022stochastic}
Y.~Sun, J.~Shao, S.~Li, Y.~Mao, and J.~Zhang, ``Stochastic coded federated learning with convergence and privacy guarantees,'' in \emph{2022 IEEE International Symposium on Information Theory (ISIT)}.\hskip 1em plus 0.5em minus 0.4em\relax IEEE, 2022, pp. 2028--2033.

\bibitem{ahlswede2000network}
R.~Ahlswede, N.~Cai, S.-Y. Li, and R.~W. Yeung, ``Network information flow,'' \emph{IEEE Transactions on information theory}, vol.~46, no.~4, pp. 1204--1216, 2000.

\bibitem{li2003linear}
S.-Y. Li, R.~W. Yeung, and N.~Cai, ``Linear network coding,'' \emph{IEEE transactions on information theory}, vol.~49, no.~2, pp. 371--381, 2003.

\bibitem{fragouli2006network}
C.~Fragouli, J.-Y. Le~Boudec, and J.~Widmer, ``Network coding: an instant primer,'' \emph{ACM SIGCOMM Computer Communication Review}, vol.~36, no.~1, pp. 63--68, 2006.

\bibitem{zhu2023towards}
Z.~Zhu, Y.~Shi, G.~Xin, C.~Peng, P.~Fan, and K.~B. Letaief, ``Towards efficient federated learning: Layer-wise pruning-quantization scheme and coding design,'' \emph{Entropy}, vol.~25, no.~8, p. 1205, 2023.

\bibitem{ho2003randomized}
T.~Ho, M.~Medard, J.~Shi, M.~Effros, and D.~R. Karger, ``On randomized network coding,'' in \emph{Proceedings of the Annual Allerton Conference on Communication Control and Computing}, vol.~41, no.~1.\hskip 1em plus 0.5em minus 0.4em\relax The University; 1998, 2003, pp. 11--20.

\end{thebibliography}

\end{document}